\documentclass{article}

\usepackage{amsmath,amsfonts,amssymb}
\usepackage{amsthm}

\usepackage{comment}
\usepackage{algorithm}
\usepackage{algpseudocode}
\usepackage{todonotes}
\newcommand{\Xspace}{\mathcal{X}}

\newcommand{\Dspace}{\mathcal{D}}

\newcommand{\Rset}{\mathbb{R}}
\newcommand{\trainingset}{\mathcal{S}}
\newcommand{\bfx}{\boldsymbol{x}}
\newcommand{\bfv}{\boldsymbol{v}}

\newcommand{\bfI}{\mathbf{I}}
\newcommand{\bfz}{\boldsymbol{z}}
\newcommand{\bfg}{\boldsymbol{g}}
\newcommand{\bfh}{\boldsymbol{h}}
\newcommand{\bfp}{\boldsymbol{p}}
\newcommand{\bff}{\boldsymbol{f}}

\newcommand{\bfe}{\boldsymbol{e}}
\newcommand{\bfmu}{\boldsymbol{\mu}}
\newcommand{\bfSigma}{\mathbf{\Sigma}}
\newcommand{\bfy}{\boldsymbol{y}}
\newcommand{\bfa}{\boldsymbol{a}}

\newcommand{\bfW}{\mathbf{W}}
\newcommand{\bfB}{\mathbf{B}}

\newcommand{\R}{\mathbb{R}}
\newcommand{\expectation}{\mathbb{E}}

\newcommand{\gaussian}{\mathcal{N}}
\newcommand{\clip}{\text{clip}}

\newcommand{\loss}{\mathcal{L}}
\DeclareMathOperator{\union}{\cup}
\newcommand{\image}{\text{Im }}
\newcommand{\algo}{\mathcal{A}}
\newcommand{\proba}{\mathbb{P}}
\newcommand{\Renyi}{\mathbb{D}}

\newcommand{\scale}{\text{scale}}
\newcommand{\offset}{\text{offset}}
\newcommand{\sign}{\text{sign}}

\theoremstyle{plain}
\newtheorem{theorem}{Theorem}
\newtheorem{proposition}{Proposition}

\newenvironment{manualtheorem}[1]{%
  \manualtheoreminner
}{\endmanualtheoreminner}
\theoremstyle{definition}
\newtheorem{definition}{Definition}[section]
\newtheorem{remark}{Remark}[section]




     \usepackage[final,nonatbib]{neurips_2021}

\usepackage[utf8]{inputenc} 
\usepackage[T1]{fontenc}    
\usepackage{hyperref}       
\usepackage{url}            
\usepackage{booktabs}       
\usepackage{amsfonts}       
\usepackage{nicefrac}       
\usepackage{microtype}      
\usepackage{booktabs}
\usepackage{multirow}
\usepackage{mathtools}
\usepackage{xcolor}
\usepackage{wrapfig}

\title{Photonic Differential Privacy with Direct Feedback Alignment}

\author{
  \textbf{Ruben Ohana}\thanks{Equal contribution. Corresponding authors: \texttt{ruben@lighton.ai} and \texttt{hj.medinaruiz@criteo.com}} $^{\,1,3}$ , \textbf{Hamlet J. Medina Ruiz}$^{*2}$, \textbf{Julien Launay}$^{*1,3}$, \textbf{Alessandro Cappelli}$^{1}$,\vspace{0.1em} \\ \textbf{Iacopo Poli}$^{1}$, \textbf{Liva Ralaivola}$^{2}$, \textbf{Alain Rakotomamonjy}$^{2}$ \vspace{0.6em} \\
  $^{1}$LightOn, Paris, France  \\$^{2}$Criteo AI Lab, Paris, France \\$^{3}$LPENS, École Normale Supérieure, Paris, France \\}

\begin{document}

\maketitle
\vspace{-2em}
\begin{abstract}

Optical Processing Units (OPUs) -- low-power photonic chips dedicated to large scale random projections -- have been used in previous work to train deep neural networks using Direct Feedback Alignment (DFA), an effective alternative to backpropagation. Here, we demonstrate how to leverage the intrinsic noise of optical random projections to build a differentially private DFA mechanism, making OPUs a solution of choice to provide a \emph{private-by-design} training. 
We provide a theoretical analysis of our adaptive privacy mechanism, carefully measuring how the noise of optical random projections propagates in the process and gives rise to provable Differential Privacy. Finally, we conduct experiments demonstrating the ability of our learning procedure to achieve solid end-task performance.

\end{abstract}
\vspace{-1.2em}
\section{Introduction}
\label{sec:introduction}
The widespread use of machine learning models has created concern about their release in the wild when trained on sensitive data such as health records or queries in data bases \cite{berger2019emerging, johnson2018towards}. Such concern has motivated a abundant line of research around privacy-preserving training of models. A popular technique to guarantee privacy is {\em differential privacy} (DP), that works by injecting noise in an deterministic algorithm, making the contribution of a single data-point hardly distinguishable from the added noise. Therefore it is impossible to infer information on individuals from the aggregate.

While there are alternative methods to ensure privacy, such as knowledge distillation (e.g. PATE \cite{papernot2018scalable}), a simple and effective strategy is to use perturbed and quenched Stochastic Gradient Descent (SGD) \cite{abadi2016deep}: the gradients are clipped before being aggregated and then perturbed by some additive noise, finally they are used to update the parameters. The DP property comes at a cost of decreased utility. These biased and perturbed gradients provide a noisy estimate of the update direction and decrease utility, i.e. end-task performance.

We revisit this strategy and develop a private-by-design learning algorithm, inspired by the implementation of an alternative training algorithm, Direct Feedback Alignment \cite{Nokland2016DirectFA}, on Optical Processing Units \cite{lighton2020}, photonic co-processors dedicated to large scale random projections. The analog nature of the photonic co-processor implies the presence of noise, and while this is usually minimized, in this case we propose to leverage it, and tame it to fulfill our needs, \emph{i.e} to control the level of privacy of the learning process. The main sources of noise in Optical Processing Units can be modeled as additive Poisson noise on the output signal, and approach a Gaussian distribution in the operating regime of the device. In particular, these sources can be modulated through temperature control, in order to attain a desired  privacy level.

Finally, we test our algorithm using the photonic hardware demonstrating solid performance on the goal metrics. To summarize, our setting consists in OPUs  performing the multiplication by a fixed random matrix, with a different realization of additive noise for every random projection.

\subsection{Related work}
The amount of noise needed to guarantee differential privacy was first formalized in \cite{Dwork2006CalibratingNT}. Later, a training algorithm that satisfied Renyi Differential Privacy was proposed in \cite{abadi2016deep}. This sparked a line of research in differential privacy for deep learning, investigating different architecture and clipping or noise injection mechanisms \cite{Abay2018PrivacyPS,cs2019DifferentiallyPM}. The majority of these works though rely on backpropagation. An original take was offered in \cite{lee2020}, that evaluated the privacy performance of Direct Feedback Alignment (DFA) \cite{Nokland2016DirectFA}, an alternative to backpropagation. While  Lee et al. \cite{lee2020} basically extend the gradient clipping/Gaussian mechanism approach to DFA, our work, while applied to the same DFA setting, is motivated by a photonic implementation that naturally induces noise that we exploit for differential privacy. As such, we provide a new DP framework together with its theoretical analysis.

\subsection{Motivations and contributions}

We propose a hardware-based approach to Differential Privacy (DP), centered around a photonic co-processor, the OPU. We use it to perform optical random projections for a differentially private DFA training algorithm, leveraging noise intrinsic to the hardware to achieve \emph{privacy-by-design}. This is a significant departure from the classical view that such analog noise should be minimized, instead leveraging it as a key feature of our approach. Our mechanism can be formalized through the following (simplified) learning rule at layer $\ell$:

\begin{equation}
    \delta \mathbf{W}^\ell = - \eta[(\underbrace{\mathbf{B}^{\ell+1}\mathbf{e}}_{\mathclap{\text{scaled DFA learning signal}}}+\overbrace{\bfg}^{\mathclap{\text{Gaussian hardware noise}}}) \odot \phi'_\ell(\mathbf{z}^\ell)] (\underbrace{\mathbf{h}^{\ell-1}}_{\mathclap{\text{neuron-wise clipped activations}}})^\top
    \label{eq:photonic_dfa}
\end{equation}

\textbf{Photonic-inspired and tested.} The OPU is used as both inspiration and an actual platform for our experiments. We demonstrate theoretically that the noise induced by the analog co-processor makes the algorithm private by design, and we perform experiments on a real photonic co-processor to show we achieve end-task performance competitive with DFA on GPU. 

\textbf{Differential Privacy beyond backpropagation.} We extend previous work \cite{lee2020} on DP with DFA both theoretically and empirically. We provide new theoretical elements for noise injected on the DFA learning signal, a setting closer to our hardware implementation. 

\textbf{Theoretical contribution.} Previous works on DP and DFA \cite{lee2020} proposes a straightforward extension of the DP-SGD paradigm to direct feedback alignment. In our work, by adding noise directly to the random projection in DFA, we study a different Gaussian mechanism \cite{mironov2017} with a covariance matrix depending on the values of the activations of the network. Therefore the theoretical analysis is more challenging than in \cite{lee2020}. We succeed to upper bound the Rényi Divergence of our mechanism and deduce the $(\epsilon,\delta)$-DP parameters of our setup. 

\section{Background}
\label{sec:problem}

Formally the problem we study the following: the analysis of the built-in Differential Privacy thanks to the combination of DFA and OPUs to train deep architectures. Before proceeding, we recall a minimal set of principles of DFA and Differential Privacy.

From here on $\{\bfx_i\}_{i=1}^N$ are the training points belonging to $\mathbb{R}^d$, $\{y_i\}_{i=1}^N$ the target labels belonging to $\mathbb{R}$. The aim of training a neural network is to find a function $f:\mathbb{R}^d\rightarrow\mathbb{R}$ that minimizes the {\em true $\loss$-risk} $\expectation_{XY\sim D}\mathcal{L}(f(X),Y)$, 
where $\loss:\Rset\times\Rset\rightarrow\Rset$ is a loss function and  $D$ a fixed (and unknown)
distribution over data and labels (and the $(\bfx_i,y_i)$ are independent realizations
of $X,Y$), and to achieve that, the {\em empirical} risk $\frac{1}{N}\sum_{i=1}^N\loss(f(\bfx_i),y_i)$ is
minimized.
\subsection{Learning with Direct Feedback Alignment (DFA)}
\begin{figure}[t]
    \centering
    \includegraphics[width=0.7\textwidth]{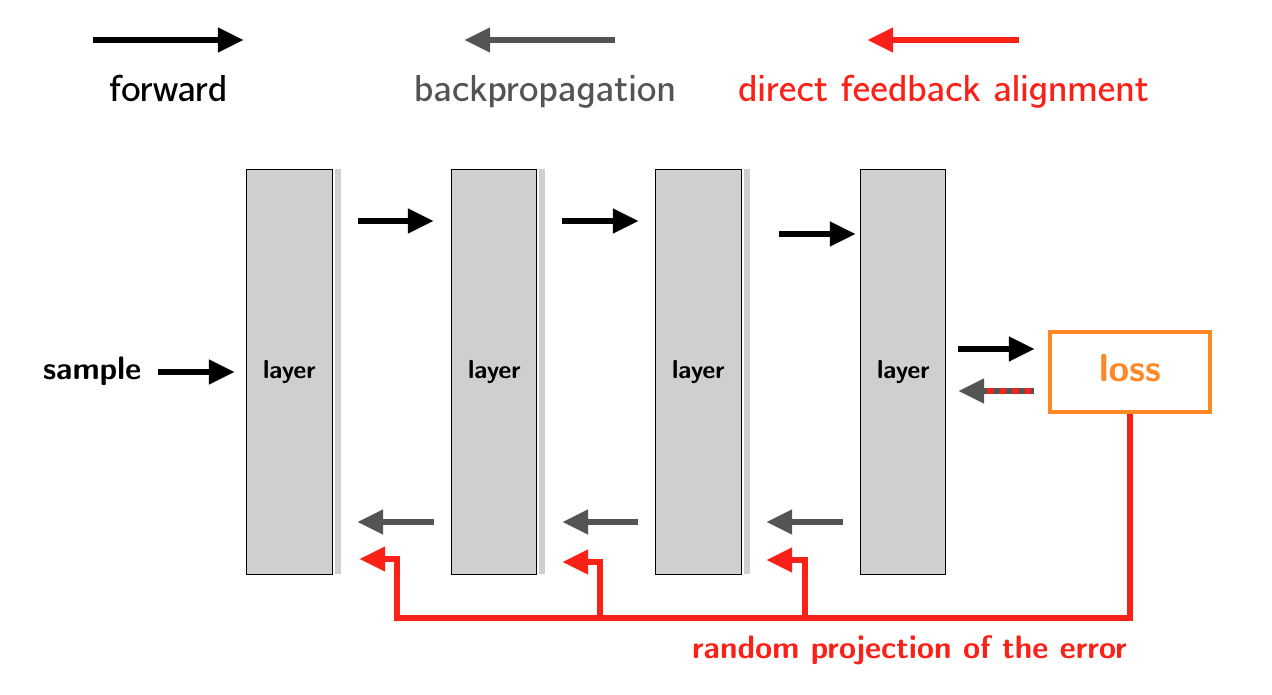}
    \caption{Schematic comparison of backpropagation and direct feedback alignment. The two approaches differ in how the loss impacts each layer of the model. While in backpropagation, the loss is propagated sequentially backwards, in DFA, it directly acts on each layer after random projection.}
    \label{fig: DFA}
\end{figure}

DFA is a biologically inspired alternative to backpropagation with an asymmetric backward pass. For ease of notation, we introduce it for fully connected networks but it generalizes to convolutional networks, transformers and other architectures \cite{launay2020direct}. It has been theoretically studied in \cite{lillicrap2016random,refinetti2020dynamics}. Note that in the following, we incorporate the bias terms in the weight matrices.

\textbf{Forward pass.} In a model with $L$ layers of neurons, $\ell\in\{1,\ldots,L\}$ is the
index of the $\ell$-th layer, $\mathbf{W}^\ell\in\mathbb{R}^{n_\ell\times n_{\ell-1}}$ the weight matrix 
between layers $\ell-1$ and $\ell$, $\phi_\ell$ the activation function of the neurons, and $\bfh_\ell$ their activations. The forward pass for a pair $(\bfx,y)$ writes as:
\begin{equation}
\forall \ell \in \{1, \dots, L\} : \mathbf{z}_\ell = \mathbf{W}^\ell \mathbf{h}^{\ell-1}, \mathbf{h}^\ell = \phi_\ell(\mathbf{z}^\ell),
\label{eq:forward_pass}
\end{equation}
where $\mathbf{h}^0 \doteq \bfx$ and $\mathbf{\hat{y}}\doteq\mathbf{h}^L = \phi(\mathbf{z}^L)$ is the predicted
output.

\textbf{Backpropagation update.}
With backpropagation \cite{Rumelhart86Nature}, leaving aside the specifics of the optimizer used (learning rate, etc.), the weight updates are computed using the chain-rule of derivatives :
\begin{equation}
    \delta \mathbf{W}^\ell = - \frac{\partial \mathcal{L}}{\partial \mathbf{W}^\ell} = - [((\mathbf{W}^{\ell+1})^\top \delta \mathbf{z}^{\ell+1})\odot \phi'_\ell(\mathbf{z}^\ell)] (\mathbf{h}^{\ell-1})^\top, \delta \mathbf{z}^\ell = \frac{\partial \mathcal{L}}{\partial \mathbf{z}^\ell},
    \label{eq:backprop_update}
\end{equation}
where $\phi_{\ell}'$ is the derivative of $\phi_{\ell}$, $\odot$ is the Hadamard product, and  $\loss(\mathbf{\hat{y}}, \mathbf{y})$ is the prediction loss.
\paragraph{DFA update.}
 DFA replaces the gradient signal $(\mathbf{W}^{\ell+1})^{\top}\delta \mathbf{z}^{\ell+1}$ with a random projection of the derivative of the loss with respect to the pre-activations $\delta \bfz^L$ of the last layer. For losses $\loss$ commonly used in classification and regression, such as the squared loss or the cross-entropy loss, this will amount to a random projection of the error $\mathbf{e} \propto \mathbf{\hat{y}} - \mathbf{y}$. With a fixed random matrix $\mathbf{B}^{\ell+1}$ of appropriate shape drawn at initialization of the learning process, the parameter update of DFA is:
\begin{equation}
    \delta \mathbf{W}^\ell = - [(\mathbf{B}^{\ell+1}\mathbf{e}) \odot \phi'_\ell(\mathbf{z}^\ell)] (\mathbf{h}^{\ell-1})^\top, \mathbf{e} = \frac{\mathbf{\partial \mathcal{L}}}{\partial \bfz^L}
    \label{eq:dfa_update}
\end{equation}
\vspace{-0.4cm}
\paragraph{Backpropagation vs DFA training.}
Learning using backpropagation consists in iteratively
applying the forward pass~\eqref{eq:forward_pass} on
batches of training examples and then applying 
backpropagation updates~\eqref{eq:backprop_update}. Training with DFA consists in replacing the backpropagation updates by DFA ones~\eqref{eq:dfa_update}. An interesting feature of DFA is the parallelization of the training step, where all the random projections of the error can be done at the same time, as shown in Figure \ref{fig: DFA}. 
\subsection{Optical Processing Units} 
An Optical Processing Unit (OPU)\footnote{Accessible through LightOn Cloud: \url{https://cloud.lighton.ai}.} is a co-processor that multiplies an input vector $\bfx\in \mathbb{R}^d$ by a fixed random matrix $\bfB\in\mathbb{R}^{D\times d}$, harnessing the physical phenomenon of light scattering through a diffusive medium~\cite{lighton2020}. The operation performed is:
\begin{equation}
    \bfp = \bfB\bfx
    \label{eq: opu operation}
\end{equation}
If the coefficients of $\bfB$ are unknown, they are guaranteed to be (independently) distributed 
according to a Gaussian distribution \cite{lighton2020, ohana2020kernel}.
An additional interesting characteristics of the OPU is its low energy consumption compared to GPUs for high-dimensional matrix multiplication \cite{ohana2020kernel}.

A central feature we will rely on is that the measurement of the random projection~\eqref{eq: opu operation} may be
corrupted by an additive zero-mean Gaussian random vector $\bfg$, so as for an OPU to provide access to 
$\bfp = \bfB\bfx +\bfg$. If $\bfg$ is usually negligible, its variance can be modulated by controlling the physical environment around the OPU. We take advantage of this feature to enforce differential privacy. In the current  versions of the OPUs, however, modulating the analog noise at will is not easy, so we will \textit{simulate} the noise
numerically in the experiments.

\subsection{Differential Privacy (DP)}
Differential Privacy~\cite{Dwork2006CalibratingNT,dwork2008differential} 
sets a framework to analyze the privacy guarantees of algorithms.  It rests on the following core definitions.

\begin{definition}[Neighboring datasets] Let $\{\Xspace^j\}_{j=1}^N$ (e.g. $\Xspace^j = \Rset^d$)  be a 
	{\em domain} and $\Dspace\doteq\union_{j=1}^{N}\Xspace^j$.
	$D, D'\in \Dspace$ are {\em neighboring datasets} if they
	differ from one element. This is denoted by $\Dspace\sim\Dspace'$.
\end{definition}

\begin{definition}[$(\varepsilon,\delta)$-differential privacy \cite{dwork2008differential}]
	\label{def:dp} Let $\varepsilon,\delta > 0$. 
	Let $\mathcal{A}:\Dspace\to \textrm{Range}\algo$ be a {\em randomized} algorithm, where $\textrm{Range} \algo$ is the range
	of $\Dspace$ through $\algo$. $\algo$ is $(\varepsilon,\delta)$-differentially private, or $(\varepsilon,\delta)$-DP, if for all neighboring datasets $D,D^\prime\in\Dspace$ and for all 
	sets $\mathcal{O}\in\image \algo$, the following inequality holds:
	$$
	\proba[\mathcal{A}(D) \in \mathcal{O}] \leq 	e^\varepsilon \proba[\mathcal{A}(D^\prime) \in \mathcal{O}] + \delta
	$$
	where the probability relates to the randomness of $\algo$.
\end{definition}

Mironov \cite{mironov2017} proposed
an alternative notion of differential privacy based on 
Rényi $\alpha$-divergences and established 
a connection between their definition
and the $(\varepsilon,\delta)$-differential privacy 
of Definition~\ref{def:dp}. Rényi-based Differential Privacy is captured by
the following:
\begin{definition}[Rényi $\alpha$-divergence \cite{renyi1961}]
\label{def:renyi}
For two probability distributions $P$ and $Q$ 
 defined over $\R$, the Rényi divergence
 of order $\alpha > 1$ is given by:
\begin{equation}
\Renyi_{\alpha}\left(P\|Q\right)\doteq \frac{1}{\alpha-1}\log \expectation_{x\sim Q}\left(\frac{P(x)}{Q(x)}\right)^{\alpha}
\end{equation}
\end{definition}
\begin{definition}[$(\alpha,\varepsilon)$-Rényi differential privacy \cite{mironov2017}]
\label{def:rdp}
Let $\varepsilon> 0$ and $\alpha>1$. A randomized algorithm $\mathcal{A}$ is  $(\alpha, \varepsilon)$-Rényi differential private or  $(\alpha, \varepsilon)$-RDP, if for any neighboring datasets $D,D^\prime\in\Dspace$,
$$
\Renyi_\alpha\left(\mathcal{A}(D)\| \mathcal{A}(D^\prime)\right) \leq \varepsilon.$$
\end{definition}
\begin{theorem}[Composition of RDP mechanisms \cite{mironov2017}]
\label{theorem: combination parameters RDP}
Let $\{M_i\}_{i=1}^k$ be a set of mechanisms, each satisfying $(\alpha,\epsilon_i)$-RDP. Then their combination is $(\alpha,\sum_i\epsilon_i)$-RDP.

\end{theorem}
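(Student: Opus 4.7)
The plan is to proceed by induction on $k$: the base case $k=1$ is immediate, and the inductive step reduces to the two-mechanism case. For that case, the central tool is a chain-rule-style identity for Rényi divergence applied to joint distributions. Given (possibly adaptive) mechanisms $M_1, M_2$, I would view the composition as producing a joint output $(Y_1, Y_2)$ where $Y_1 \sim M_1(D)$ and, conditional on $Y_1 = y_1$, $Y_2 \sim M_2(D; y_1)$ (the auxiliary argument accommodating adaptivity). The goal is to bound $\Renyi_\alpha$ of this joint distribution under $D$ versus $D'$.

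First I would factor the joint densities as $p(y_1, y_2) = p_1(y_1)\, p_{2\mid 1}(y_2 \mid y_1)$ and $q(y_1, y_2) = q_1(y_1)\, q_{2\mid 1}(y_2 \mid y_1)$, so that $(p/q)^\alpha = (p_1/q_1)^\alpha \cdot (p_{2\mid 1}/q_{2\mid 1})^\alpha$. Then I would evaluate $\expectation_{(y_1, y_2) \sim q}[(p/q)^\alpha]$ by conditioning on $y_1$: the inner expectation over $y_2$ equals $\exp\bigl((\alpha-1)\, \Renyi_\alpha(M_2(D; y_1) \| M_2(D'; y_1))\bigr)$, which is bounded by $\exp((\alpha-1)\epsilon_2)$ uniformly in $y_1$ by the RDP hypothesis on $M_2$. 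Pulling this constant factor out leaves $\exp((\alpha-1)\epsilon_2) \cdot \expectation_{y_1 \sim q_1}[(p_1/q_1)^\alpha]$, which the RDP hypothesis on $M_1$ bounds by $\exp\bigl((\alpha-1)(\epsilon_1 + \epsilon_2)\bigr)$. Taking logarithms and dividing by $\alpha - 1$ yields the two-mechanism bound, and induction immediately gives $\sum_{i=1}^k \epsilon_i$ for general $k$.

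The main obstacle is handling adaptivity correctly: the RDP bound on $M_2$ must hold uniformly over the auxiliary input $y_1$, so the hypothesis ``$M_2$ is $(\alpha, \epsilon_2)$-RDP'' must be interpreted as a per-input guarantee (for every fixed $y_1$, the map $D \mapsto M_2(D; y_1)$ is $(\alpha, \epsilon_2)$-RDP) rather than an averaged one. Once this interpretation is in place, the chain-rule computation goes through with routine measurability bookkeeping on the conditional densities; in the purely non-adaptive setting, the argument collapses to a product-distribution factorization via Fubini and is immediate.
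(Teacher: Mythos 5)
Your proof is correct. Note that the paper does not prove this statement at all---it is quoted as background directly from Mironov's work on R\'enyi differential privacy---so there is no in-paper argument to compare against; your conditioning/chain-rule computation is precisely the canonical proof of that cited result, including the correct observation that the inner expectation over $y_2$ equals $\exp\bigl((\alpha-1)\,\Renyi_\alpha(M_2(D;y_1)\|M_2(D';y_1))\bigr)$ and must be bounded uniformly in the auxiliary input $y_1$ for the adaptive case to go through. The only caveat worth recording is the one you already flag: the theorem as stated in the paper is silent on adaptivity, and your per-auxiliary-input reading of the hypothesis on $M_2$ is the interpretation under which the claim is true.
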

Going from RDP to the Differential Privacy of Definition~\ref{def:dp} is made possible by the following
theorem (see also \cite{asoodeh2020three,balle18a,wang2019subsampled}).
\begin{theorem}[Converting RDP parameters to DP parameters \cite{mironov2017}]\label{theorem: conversion RDP DP}
An $(\alpha,\varepsilon)$-RDP mechanism is $\left(\varepsilon+\frac{\log 1/\delta}{\alpha-1},\delta \right)$-DP for all $\delta\in(0,1).$
\end{theorem}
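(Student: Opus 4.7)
The plan is to unpack the definition of Rényi divergence to a moment bound, then use a Markov/Hölder-style tail splitting argument to recover the standard $(\varepsilon',\delta)$-DP inequality.

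First, I would fix neighboring datasets $D,D'\in\Dspace$ and let $P$ and $Q$ denote the densities (with respect to a common dominating measure) of $\mathcal{A}(D)$ and $\mathcal{A}(D')$. By Definition~\ref{def:renyi}, the assumption $\Renyi_\alpha(P\|Q)\le\varepsilon$ is equivalent to the moment bound
\begin{equation*}
\expectation_{x\sim Q}\!\left[\left(\tfrac{P(x)}{Q(x)}\right)^{\alpha}\right]\;\le\;e^{(\alpha-1)\varepsilon}.
\end{equation*}
Write $\varepsilon' = \varepsilon + \frac{\log(1/\delta)}{\alpha-1}$ for the target DP parameter and, for any measurable event $\mathcal{O}\in\image\algo$, split it along the likelihood ratio: set $E = \{x:P(x)/Q(x) > e^{\varepsilon'}\}$. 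The goal reduces to showing $\proba[\mathcal{A}(D)\in\mathcal{O}] = P(\mathcal{O}\cap E) + P(\mathcal{O}\cap E^c) \le \delta + e^{\varepsilon'}Q(\mathcal{O})$.

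The second term is immediate: on $E^c$ the ratio is at most $e^{\varepsilon'}$, so
\begin{equation*}
P(\mathcal{O}\cap E^c)=\int_{\mathcal{O}\cap E^c}\tfrac{P(x)}{Q(x)}\,Q(x)\,dx \;\le\; e^{\varepsilon'}\,Q(\mathcal{O}\cap E^c)\;\le\;e^{\varepsilon'} Q(\mathcal{O}).
\end{equation*}
The work is therefore concentrated in bounding $P(E)$ by $\delta$, which I see as the main obstacle. My plan is a two-step combination. First, a Markov-type step using the definition of $E$: since $(P/Q)^\alpha > e^{\alpha\varepsilon'}$ on $E$, the moment bound gives $Q(E)\le e^{(\alpha-1)\varepsilon-\alpha\varepsilon'}$. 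Second, Hölder's inequality with conjugate exponents $\alpha$ and $\alpha/(\alpha-1)$ applied to $\mathbf{1}_E\cdot (P/Q)$ under $Q$:
\begin{equation*}
P(E)=\expectation_{x\sim Q}\!\left[\mathbf{1}_E\,\tfrac{P}{Q}\right] \;\le\; \expectation_Q\!\left[(P/Q)^{\alpha}\right]^{1/\alpha} Q(E)^{(\alpha-1)/\alpha}.
\end{equation*}

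Finally, substituting the Rényi moment bound and the Markov bound on $Q(E)$ into the Hölder inequality yields
\begin{equation*}
P(E)\;\le\; e^{(\alpha-1)\varepsilon/\alpha}\cdot e^{[(\alpha-1)\varepsilon-\alpha\varepsilon'](\alpha-1)/\alpha}\;=\;e^{(\alpha-1)(\varepsilon-\varepsilon')},
\end{equation*}
after algebraic simplification. The choice $\varepsilon' = \varepsilon + \frac{\log(1/\delta)}{\alpha-1}$ makes the exponent exactly $\log\delta$, so $P(E)\le\delta$. Combining the two bounds for $\mathcal{O}\cap E$ and $\mathcal{O}\cap E^c$ gives $\proba[\mathcal{A}(D)\in\mathcal{O}]\le e^{\varepsilon'}\proba[\mathcal{A}(D')\in\mathcal{O}] + \delta$, which is $(\varepsilon',\delta)$-DP per Definition~\ref{def:dp}. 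The only delicate bookkeeping is making sure the exponents from the Hölder step and the Markov step collapse cleanly to $(\alpha-1)(\varepsilon-\varepsilon')$, which is the calibration that motivates the peculiar shape of $\varepsilon'$.
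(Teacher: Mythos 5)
Your proof is correct, and it is worth noting that the paper itself offers no proof of this statement: Theorem~\ref{theorem: conversion RDP DP} is imported verbatim from Mironov's RDP paper, so there is nothing internal to compare against. Your argument is a valid, self-contained derivation and matches the standard one in structure: decompose $\mathcal{O}$ along the bad event $E=\{P/Q>e^{\varepsilon'}\}$, bound the good part by $e^{\varepsilon'}Q(\mathcal{O})$, and show $P(E)\le\delta$. The one place you deviate from the textbook route is the tail bound: Mironov gets $P(E)\le e^{-(\alpha-1)\varepsilon'}\expectation_Q[(P/Q)^\alpha]$ in a single step, by multiplying the integrand on $E$ by $\bigl(P/(e^{\varepsilon'}Q)\bigr)^{\alpha-1}\ge 1$. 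Your Hölder-plus-Markov combination looks like it should lose something, but it does not: substituting the Markov estimate $Q(E)\le e^{-\alpha\varepsilon'}\expectation_Q[(P/Q)^\alpha]$ into the Hölder bound gives $P(E)\le e^{-(\alpha-1)\varepsilon'}\expectation_Q[(P/Q)^\alpha]^{1/\alpha+(\alpha-1)/\alpha}$, which is exactly Mironov's expression, so the constants and hence the final calibration of $\varepsilon'$ are identical. The algebra in your exponent bookkeeping checks out, and the only implicit assumption (absolute continuity of $P$ with respect to $Q$) is forced anyway by the finiteness of the $\alpha$-moment for $\alpha>1$.
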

For the theoretical analysis, we will need the following proposition, specific to the case of multivariate Gaussian distributions, to obtain bounds on the Rényi divergence.
\begin{proposition}[Rényi divergence for two multivariate Gaussian distributions \cite{pardo2018statistical}]\label{proposition: renyi multivariate gaussian}
    The Rényi divergence for two multivariate Gaussian distributions with means $\bfmu_1, \bfmu_2$ and respective covariances $\bfSigma_1,\bfSigma_2$ is given by:
    \begin{align}
    \label{eq: Rényi divergence 2 gaussian distribs}
    \begin{split}
        \mathbb{D}_\alpha(\mathcal{N}(\bfmu_1,\bfSigma_1)\|\mathcal{N}(\bfmu_2,\bfSigma_2)) = \frac{\alpha}{2}&(\bfmu_1-\bfmu_2)^\top (\alpha \bfSigma_2 + (1-\alpha)\bfSigma_1)^{-1}(\bfmu_1-\bfmu_2) \\&- \frac{1}{2(\alpha-1)}\log\bigg[\frac{\det(\alpha \bfSigma_2 + (1-\alpha)\bfSigma_1)}{(\det\bfSigma_1)^{1-\alpha}(\det\bfSigma_2)^\alpha}\bigg]
    \end{split}
    \end{align}
with $\det(\bfSigma)$ the determinant of the matrix $\bfSigma$. Note that ($\alpha\bfSigma_2 + (1-\alpha)\bfSigma_1)^{-1}$ must be definite-positive\footnote{Note that here $\alpha>1$ and the combination $\alpha\bfSigma_2 + (1-\alpha)\bfSigma_1$ is {\em not} a convex combination; extra-case must be given to ensure that the resulting matrix is positive}, otherwise the Rényi divergence is not defined and is equal to $+\infty$. 
\end{proposition}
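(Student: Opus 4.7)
The plan is to use the standard integral representation of the Rényi divergence and reduce the computation to a Gaussian integral. Writing $P = \mathcal{N}(\bfmu_1,\bfSigma_1)$ and $Q = \mathcal{N}(\bfmu_2,\bfSigma_2)$, the divergence of Definition~\ref{def:renyi} can be rewritten as $\Renyi_\alpha(P\|Q) = \frac{1}{\alpha-1}\log \int P(x)^\alpha Q(x)^{1-\alpha}\,dx$. I would substitute the multivariate Gaussian density formula into this expression, factor out the normalization constants $(2\pi)^{-d/2}(\det\bfSigma_1)^{-\alpha/2}(\det\bfSigma_2)^{-(1-\alpha)/2}$, and collect the remaining integrand, which is the exponential of a quadratic form in $x$.

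Next I would complete the square in the exponent. The quadratic-in-$x$ coefficient is the "precision" $A \doteq \alpha\bfSigma_1^{-1} + (1-\alpha)\bfSigma_2^{-1}$, and the linear-in-$x$ coefficient is $A\bfmu_\star$ with $\bfmu_\star \doteq A^{-1}(\alpha\bfSigma_1^{-1}\bfmu_1 + (1-\alpha)\bfSigma_2^{-1}\bfmu_2)$. Provided $A$ is positive definite (equivalently the footnote condition that $\alpha\bfSigma_2 + (1-\alpha)\bfSigma_1$ is positive definite, via the identity below), I can perform the Gaussian integral to obtain a factor $(2\pi)^{d/2}(\det A)^{-1/2}$ and a residual exponential $\exp\bigl(\tfrac12[\bfmu_\star^\top A \bfmu_\star - \alpha\bfmu_1^\top\bfSigma_1^{-1}\bfmu_1 - (1-\alpha)\bfmu_2^\top\bfSigma_2^{-1}\bfmu_2]\bigr)$.

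To match the form of the stated expression I would then invoke the key matrix identity
\begin{equation*}
\alpha\bfSigma_1^{-1} + (1-\alpha)\bfSigma_2^{-1} \;=\; \bfSigma_1^{-1}\bigl(\alpha\bfSigma_2 + (1-\alpha)\bfSigma_1\bigr)\bfSigma_2^{-1},
\end{equation*}
which one verifies by direct expansion and does not require $\bfSigma_1,\bfSigma_2$ to commute. Taking determinants yields $\det A = \det(\alpha\bfSigma_2+(1-\alpha)\bfSigma_1)\,(\det\bfSigma_1\det\bfSigma_2)^{-1}$, and combining with the normalization constants already pulled out produces exactly the log-determinant term $-\frac{1}{2(\alpha-1)}\log\bigl[\det(\alpha\bfSigma_2+(1-\alpha)\bfSigma_1)/((\det\bfSigma_1)^{1-\alpha}(\det\bfSigma_2)^\alpha)\bigr]$.

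The main obstacle, and the most calculation-heavy step, is reducing the residual mean-dependent quadratic $\bfmu_\star^\top A \bfmu_\star - \alpha\bfmu_1^\top\bfSigma_1^{-1}\bfmu_1 - (1-\alpha)\bfmu_2^\top\bfSigma_2^{-1}\bfmu_2$ to the clean form $\alpha(\alpha-1)(\bfmu_1-\bfmu_2)^\top(\alpha\bfSigma_2+(1-\alpha)\bfSigma_1)^{-1}(\bfmu_1-\bfmu_2)$ that is needed after dividing by $\alpha-1$. I would handle this by substituting the definitions of $A$ and $\bfmu_\star$, expanding into the three cross-terms in $\bfmu_1,\bfmu_2$, and then applying the same identity above (sandwiched between $\bfSigma_1$ and $\bfSigma_2$) to convert $A^{-1}$ into $\bfSigma_2(\alpha\bfSigma_2+(1-\alpha)\bfSigma_1)^{-1}\bfSigma_1$. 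After simplification the $\bfmu_1^\top(\cdot)\bfmu_1$ and $\bfmu_2^\top(\cdot)\bfmu_2$ coefficients assemble into $-\alpha(\alpha-1)$ times the inverse of $\alpha\bfSigma_2+(1-\alpha)\bfSigma_1$, the cross terms combine symmetrically, and dividing by $\alpha-1$ yields the desired first term of~\eqref{eq: Rényi divergence 2 gaussian distribs}.
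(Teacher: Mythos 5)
The paper does not actually prove this proposition --- it is imported verbatim from the literature (the citation to Pardo's book on divergence measures), so there is no in-paper argument to compare yours against. That said, your derivation is the standard one and it is correct: the rewriting $\Renyi_\alpha(P\|Q)=\frac{1}{\alpha-1}\log\int P^\alpha Q^{1-\alpha}$, the completion of the square with precision $A=\alpha\bfSigma_1^{-1}+(1-\alpha)\bfSigma_2^{-1}$, and the identity $A=\bfSigma_1^{-1}\bigl(\alpha\bfSigma_2+(1-\alpha)\bfSigma_1\bigr)\bfSigma_2^{-1}$ all check out (the latter by direct expansion, as you say, and it also shows that positive definiteness of $A$ --- the condition for the Gaussian integral to converge --- is equivalent, by congruence/similarity, to positive definiteness of $\alpha\bfSigma_2+(1-\alpha)\bfSigma_1$, which is exactly the footnote's caveat). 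I verified the mean-dependent residual as well: writing $M=\alpha\bfSigma_2+(1-\alpha)\bfSigma_1$ and using $\bfSigma_1^{-1}A^{-1}\bfSigma_1^{-1}=\frac{1}{\alpha}(\bfSigma_1^{-1}-(1-\alpha)M^{-1})$ and its counterpart for $\bfSigma_2$, one gets $\bfmu_\star^\top A\bfmu_\star-\alpha\bfmu_1^\top\bfSigma_1^{-1}\bfmu_1-(1-\alpha)\bfmu_2^\top\bfSigma_2^{-1}\bfmu_2=\alpha(\alpha-1)(\bfmu_1-\bfmu_2)^\top M^{-1}(\bfmu_1-\bfmu_2)$, which after multiplication by $\frac{1}{2(\alpha-1)}$ gives exactly the first term of \eqref{eq: Rényi divergence 2 gaussian distribs}. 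Your plan is therefore a complete and valid proof of the cited formula; the only step you leave as a sketch (the three-cross-term expansion) is routine and works out as claimed.
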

\begin{remark}
\label{rem:gaussian_mechanism}
	A standard method to generate an (R)DP algorithm from a deterministic function $\bff:\Xspace\to\Rset^d$ 
is the {\em Gaussian mechanism} $\mathcal{M}_\sigma$ acting as $\mathcal{M}_\sigma \bff(\cdot) = \bff(\cdot) + \bfv$
where $\bfv\sim\mathcal{N}(0,\sigma^2\bfI_d)$. If $\bff$ has $\Delta_{\bff}$- (or $\ell_2$-) {\em sensitivity}
 $$\Delta_{\bff} \doteq \max_{D\sim D^\prime} \|\bff(D) - \bff(D^\prime)\|_2,$$
 then $\mathcal{M}_\sigma$ is $\left(\alpha, \frac{\alpha \Delta_{\bff}^2 }{2 \sigma^2}\right)$-RDP.
\end{remark}

\section{Photonic Differential Privacy}
\label{sec:contribution}
\vspace{-0.5em}
This section explains how to use photonic devices to perform DFA and a theoretical analysis showing how
this combination entails {\em photonic differential privacy}.
\vspace{-0.5em}
\subsection{Clipping parameters}
\label{subsection: clipping}
As usual in Differential Privacy, we will need to clip layers of the network during the backward pass. Given a vector $\bfv\in\mathbb{R}^d$ and positive constants $c, s,\nu$, we define: 
\newcommand{\localtextbulletone}{\textcolor{black}{\raisebox{0.2ex}{\rule{0.8ex}{0.8ex}}}}
\renewcommand{\labelitemi}{\localtextbulletone}

\begin{itemize}
\item $\clip_c(\bfv)\doteq(\sign(v_1)\cdot\min(c,|v_1|), \dots, \sign(v_d)\cdot\min(c,|v_d|))^\top $
\item $\scale_s(\bfv) \doteq \min(s, \|\bfv\|_2) \frac{\bfv}{\|\bfv\|_2}$
\item $\offset_\nu(\bfv) \doteq (v_1 + \nu,\dots, v_d + \nu)^\top$
\end{itemize}
The weight update with clipping to be considered in place of \eqref{eq:dfa_update} is given by
\begin{equation}
\label{eq: dfa_update_with_clipping}
\delta\bfW^{\ell} = - \frac1m \sum_{i=1}^m(\scale_{s_\ell}(\bfB^{\ell+1}\bfe_i) + \bfg_i) \odot
\phi'(\bfz_i^{\ell}))\clip_{c_\ell}(\offset_{\nu_\ell}(\bfh_i^{\ell}))^\top
\end{equation}
For each layer $\ell$, we set the $s_\ell$, $c_\ell$ and $\nu_\ell$ parameters as follows:
\begin{equation}
c_\ell\doteq\frac{\tau_h^{max}}{\sqrt{n_\ell}}\qquad\nu_\ell\doteq\frac{\tau_h^{min}}{\sqrt{n_\ell}}\qquad s_\ell\doteq\tau_B
\end{equation}

\vspace{-5mm}
These choices ensure that: 
\vspace{-0.58mm}
\begin{itemize}
    \item $\tau_h^{min}\leq \|\clip_{c_\ell}(\offset_{\nu_\ell}(\bfh_i^{\ell}))\|_2\leq \tau^{max}_h$
    \item $\|\scale_{s_\ell}(\bfB^{\ell}\bfe_i)\|_2\leq \tau_B$
    \item Moreover, we require the derivatives of the each activation function $\phi_\ell$ are lower and upper bounded by constants i.e. $\gamma_\ell^{min}\leq|\phi_\ell'(t)|\leq \gamma_\ell^{max}$ for all scalars $t$. This is a reasonable assumption for activation functions such as sigmoid, tanh, ReLU\ldots
\end{itemize}

In the following, the quantities should all be considered clipped/scaled/offset as above and, for sake of clarity,
we drop the explicit mentions of these operations.
\begin{algorithm}
\label{algo: photonic DFA}
\caption{Photonic DFA training}
\begin{algorithmic}[1]
\Require training set $\trainingset=\{(\bfx_j,y_j)\}_{j=1}^N$, $\phi_\ell$ with bounded derivatives, scale parameters $s_\ell$, 
clipping thresholds $\nu_\ell$ and $c_\ell$, stepsize $\eta$, noise scale $\sigma$, minibatch
of size $m$, number of iterations $T$
\Ensure A performing DP model 
\For{$\ell = 1 \text{ to } L$}
\State Sample $\bfB^\ell$ \Comment{Note: with OPUs, there is no explicit sampling of $B$}
\EndFor
\For{$T$ iterations}
\State Create a minibatch $S\subset\{1,\ldots,N\}$ of size $|S|=m$ (sampling without replacement)
\For{$i \in S$}
\For{$\ell = 1$ to $L-1$}
\State $\bfz_i^\ell=\bfW^\ell\bfh_i^{\ell -1}$ 
\State $\bfh_i^\ell=\phi_\ell(\bfz_i^\ell)$

\EndFor
\State $\hat{\bfy}_i\leftarrow \phi_{\ell}(\bfW^{\ell}\bfh_i^{\ell -1})$
\EndFor
\For{$l=L$ to $1$}
\State Perform $\bfB^{\ell +1}\bfe_i$  with the OPU
\State Independently sample $\bfg_1^\ell,\ldots,\bfg_m^\ell\sim\gaussian(0,\sigma^2I_D)$
\State $\bfW^\ell\leftarrow \bfW^\ell - \frac{\eta}{m} \sum_{i=1}^m( (      \scale_{s_\ell}(\bfB^{\ell +1}\bfe_i) + \bfg_i^\ell) \odot \phi_\ell'(\bfz_i^\ell)\,)\textrm{clip}_{c_{\ell}}(\textrm{offset}_{\nu_\ell}(\bfh^\ell_i))^\top$

\EndFor
\EndFor
\end{algorithmic}
\end{algorithm}

\subsection{Photonic Direct Feedback Alignment is a natural Gaussian mechanism}
\textbf{Noise modulation.} Mainly due to the photon shot noise of the measurement process \cite{konnik2014high}, Gaussian noise is naturally added to the random projection of \eqref{eq: opu operation}. This noise is negligible for machine learning purposes, however it can be modulated through temperature control yielding the following projection:
\begin{equation}
    \label{eq: OPU + noise}
    \bfp = \bfB\bfx + \mathcal{N}(0,\sigma^2 \bfI_D)
\end{equation}
where $\bfI_D$ is the identity matrix in dimension $D$. Note that this noise is \emph{truly random} due to the quantum nature of the photon shot noise. As previously stated, due to experimental constraints, the noise will be simulated in the  experiments of Section \ref{sec:simulations}.

Building on that feature, we perform the random projection of DFA of \eqref{eq:dfa_update} using the OPU. Since this equation is valid for any layer $\ell$ of the network, we allow ourselves, for sake of clarity, to drop the layer indices and study the generic update (the quantities below are all clipped as in Section \ref{subsection: clipping}):
\begin{align}
    \delta\bfW &= - \frac1m \sum_{i=1}^m(\bfB\bfe_i + \bfg_i) \odot \phi'(\bfz_i))\bfh_i^\top \quad\textrm{(clipped quantities as in section \ref{subsection: clipping})} \label{eq: update with clipping}\\
 &=- \frac1m \sum_{i=1}^m(\bfB\bfe_i \odot \phi'(\bfz_i))\bfh_i^\top + \frac1m \sum_{i=1}^m(\bfg_i \odot \phi'(\bfz_i))\bfh_i^\top\label{Eq: decompo Gaussian mech}
\end{align}
where $\bfg_i\sim\mathcal{N}(0,\sigma^2 I_{n_{\ell}})$ is the Gaussian noise added during the OPU process. As stated previously, its variance $\sigma^2$ can be modulated to obtain the desired value. The overall training procedure with Photonic DFA (PDFA) is described in Algorithm \ref{algo: photonic DFA}. 

\subsection{Theoretical Analysis of our method}

In the following, the quantities in the DFA update of the weights are always clipped according to \eqref{eq: update with clipping} and as before clipping/scale/offset operators are in force but dropped from the text. 

To demonstrate that our mechanism is Differentially Private, we will use the following reasoning: the noise being added at the random projection level as in \eqref{eq: OPU + noise}, we can decompose the update of the weights as a Gaussian mechanism as in \eqref{Eq: decompo Gaussian mech}. We will compute the covariance matrix of the Gaussian noise, which will depend on the data, which is in striking contrast with the standard Gaussian mechanism~\cite{abadi2016deep}. We will then use Proposition \ref{proposition: renyi multivariate gaussian} to compute the upper bound the Rényi divergence. The Differential Privacy parameters will be obtained using Theorem \ref{theorem: conversion RDP DP}.

In the following, we will consider the Gaussian mechanism applied to the columns of the weight matrix. We consider this case for the following reasons: since our noise matrix has the same realisation of the Gaussian noise (but multiplied by different scalars), it makes sense to consider the Differential Privacy parameters of only columns of the weight matrix and then multiply the Rényi divergence by the number of columns. If our noise was i.i.d. we could have used the theorems from~\cite{chanyaswad2018mvg} to lower the Rényi divergence.
Given the update equation of the weights at layer $l$ in \eqref{Eq: decompo Gaussian mech}, the update of column $k$ of the weight of layer $l$ is the following Gaussian mechanism:
\begin{align}
    \frac1m \sum_{i=1}^m((\bfB\bfe_i) \odot \phi'(\bfz_i))h_{ik} + \frac1m \sum_{i=1}^m(\bfg_i \odot \phi'(\bfz_i))h_{ik} =  \bff_k(D) + \mathcal{N}(0,\bfSigma_k) \label{Eq: our Gaussian mechanism}
\end{align}

where $\bfSigma_k = \frac{\sigma^2}{m^2}\textrm{\textbf{diag}}(\bfa_k)^2$ and $(\bfa_k)_j = \sqrt{\sum_{i=1}^m(\phi'_{ij} h_{ik})^2}, \forall j=1,\dots, n_{\ell-1}$. Note that these expressions are due to the inner product with $\bfh_i$. In the following, we will focus on column $k$ and we therefore drop the index $k$ in the notation. Using the clipping of the quantities of interest detailed in \eqref{eq: dfa_update_with_clipping}, we can compute some useful bounds bounds on $a_j$:
\begin{equation}
    \sqrt{\frac{m}{n_\ell}}\gamma_\ell^{min}\tau_h^{min}\leq a_{j}\leq \sqrt{\frac{m}{n_\ell}}\gamma_\ell^{max}\tau_h^{max}
\end{equation}
\begin{proposition}[Sensitivity of Photonic DFA~\cite{lee2020}]
For neighboring datasets $D$ and $D'$ (i.e. differing from only one element), the sensitivity $\Delta_f^\ell$ of the function $\bff_k$ described in \eqref{Eq: decompo Gaussian mech} at layer $l$ is given by:
\begin{align}
\label{Eq: our sensitivity}
    \Delta_{\bff}^\ell &= \underset{D\sim D'}{\sup} \|\bff(D)-\bff(D')\|_2\leq \frac{2}{m}\| (\bfB^{\ell}\bfe_i) \odot \phi'_\ell(\bfz_i^\ell))h_{ik}^{\ell -1}\|_2\\
    &\leq \frac2m\tau_B\gamma^{max}_\ell\frac{\tau^{max}_h}{\sqrt{n_\ell}}
\end{align}

\end{proposition}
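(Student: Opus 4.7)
The plan is to exploit the decomposition of $\bff_k$ as an average of $m$ per-sample contributions and then apply the clipping/scaling bounds termwise. Since $D$ and $D'$ differ in a single element, the two corresponding minibatches $\{\bfx_1,\dots,\bfx_m\}$ and $\{\bfx_1',\dots,\bfx_m'\}$ also agree on all but at most one index $i_0$. Therefore all but the $i_0$-th summand of $\bff_k(D)=\tfrac{1}{m}\sum_{i=1}^m(\bfB\bfe_i\odot\phi'(\bfz_i))h_{ik}$ cancel in the difference $\bff_k(D)-\bff_k(D')$, leaving
\[
\bff_k(D)-\bff_k(D') = \tfrac{1}{m}\bigl[(\bfB\bfe_{i_0}\odot\phi'(\bfz_{i_0}))h_{i_0k} - (\bfB\bfe_{i_0}'\odot\phi'(\bfz_{i_0}'))h_{i_0k}'\bigr].
\]

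Next I would apply the triangle inequality to this two-term expression and upper-bound each piece by the supremum over all admissible (clipped) inputs. This yields the first inequality in the proposition, namely $\tfrac{2}{m}\|(\bfB^{\ell}\bfe_i)\odot\phi_\ell'(\bfz_i^\ell)\,h_{ik}^{\ell-1}\|_2$, since both the $D$ and $D'$ term contribute one identical worst-case bound.

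For the second inequality I would peel off the three factors using the bounds established in Section~3.1. First, since $h_{ik}^{\ell-1}$ is a scalar, pull it out as $|h_{ik}^{\ell-1}|\le c_\ell = \tau_h^{max}/\sqrt{n_\ell}$ (a coordinate is bounded by the $\ell_\infty$ cap imposed by $\clip_{c_\ell}\circ\offset_{\nu_\ell}$). Then use the standard inequality $\|\bfa\odot\bfb\|_2\le \|\bfa\|_\infty\|\bfb\|_2$ with $\bfa=\phi_\ell'(\bfz_i^\ell)$ (entrywise bounded by $\gamma_\ell^{max}$) and $\bfb=\bfB^\ell\bfe_i$ (whose $\ell_2$-norm is bounded by $\tau_B$ thanks to $\scale_{s_\ell}$). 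Multiplying the three bounds yields exactly $\tfrac{2}{m}\tau_B\gamma_\ell^{max}\tau_h^{max}/\sqrt{n_\ell}$.

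I do not expect any serious obstacle: the argument is essentially the ``remove-or-replace-one-point'' sensitivity argument used for the classical Gaussian mechanism, transported to the DFA update rule. The only mild subtlety is making sure one uses $\|\cdot\|_\infty\|\cdot\|_2$ rather than $\|\cdot\|_2\|\cdot\|_2$ for the Hadamard product, so that the scaling $s_\ell=\tau_B$ controls the $\ell_2$ norm of $\bfB^\ell\bfe_i$ globally while the derivative bound is used only coordinate-wise; mixing these up would waste a factor of $\sqrt{n_\ell}$ and break the matching with the subsequent Rényi-divergence computation.
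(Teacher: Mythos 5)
Your proposal is correct and follows exactly the argument the paper leaves implicit in the stated chain of inequalities: cancel the $m-1$ common summands, apply the triangle inequality to get the factor $2/m$, then bound the remaining term via $|h_{ik}|\le c_\ell=\tau_h^{max}/\sqrt{n_\ell}$, $\|\phi_\ell'(\bfz_i)\|_\infty\le\gamma_\ell^{max}$, and $\|\bfB\bfe_i\|_2\le\tau_B$. Your remark about using $\|\bfa\odot\bfb\|_2\le\|\bfa\|_\infty\|\bfb\|_2$ rather than a cruder bound is exactly the right care needed to avoid a spurious $\sqrt{n_\ell}$ factor.
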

 The following proposition is our main theoretical result: we compute the $\epsilon$ parameter of Rényi Differential Privacy. 
\begin{proposition}[Photonic Differential Privacy parameters]\label{proposition 3}
Given two probability distributions $P \sim \mathcal{N}(\bff(D),\bfSigma)$ and $Q \sim \mathcal{N}(\bff(D'),\bfSigma')$ corresponding to the Gaussian mechanisms depicted in \eqref{Eq: our Gaussian mechanism} on neighboring datasets $D$ and $D'$, the Rényi divergence of order $\alpha$ between these mechanisms is:
\begin{align}
    \mathbb{D}_\alpha(P\|Q)&\leq \nonumber \frac{2 n_{\ell}. \alpha  }{m.\sigma^2}
    \frac{(\gamma^{max}\tau^{max}\tau_B)^2}{(\gamma_\ell^{min}\tau_h^{min})^2}
    + \frac{n_{\ell}.\alpha}{2(\alpha-1)}\log\bigg[\frac{m(\gamma_\ell^{min}\tau_h^{min})^2}{(m+1)(\gamma_\ell^{min}\tau_h^{min})^2 - (\gamma_\ell^{max}\tau_h^{max})^2} \bigg] \\
    &=\varepsilon_{\textrm{PDFA}}
\end{align}
Our mechanism is therefore $(\alpha, T\varepsilon_{\textrm{PDFA}})$-RDP with $T$ the number of training epochs. We can deduce that the mechanism on the weight matrix with $n_{\ell - 1}$ columns is $(\alpha, Tn_{\ell -1}\varepsilon_{\textrm{PDFA}})$-RDP. Then the mechanism of the whole network composed of $L$ layers is $(\alpha, LTn_{\ell -1}\varepsilon_{\textrm{PDFA}})$-RDP. 
We can then convert our bound to DP parameters using Theorem \ref{theorem: conversion RDP DP} to obtain a $(LTn_{\ell -1}\varepsilon_{\textrm{PDFA}} + \frac{\log 1/\delta}{\alpha-1},\delta)$-DP mechanism for all $\delta\in(0,1)$.
\end{proposition}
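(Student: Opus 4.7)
The plan is to apply the multivariate Gaussian Rényi divergence formula (Proposition \ref{proposition: renyi multivariate gaussian}) directly to $P=\gaussian(\bff(D),\bfSigma)$ and $Q=\gaussian(\bff(D'),\bfSigma')$. Since both covariance matrices are diagonal with entries $\bfSigma_{jj}=(\sigma^2/m^2)a_j^2$ and $\bfSigma'_{jj}=(\sigma^2/m^2)(a_j')^2$, the divergence splits into a quadratic form in $\bfmu_1-\bfmu_2=\bff(D)-\bff(D')$ and a logarithmic determinant term, and I would bound each separately before summing.

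For the quadratic term $\tfrac{\alpha}{2}(\bfmu_1-\bfmu_2)^\top M^{-1}(\bfmu_1-\bfmu_2)$ with $M=\alpha\bfSigma'+(1-\alpha)\bfSigma$ diagonal, I would upper-bound the form by $\|\bfmu_1-\bfmu_2\|_2^2/\lambda_{\min}(M)$. The uniform lower bound $a_j^2,(a_j')^2\geq (m/n_\ell)(\gamma_\ell^{min}\tau_h^{min})^2$ established in Section \ref{subsection: clipping} yields a lower bound on $\lambda_{\min}(M)$ (in the regime where $M$ is positive definite, see below). Combining this with the sensitivity bound $\Delta_{\bff}^\ell\leq (2/m)\tau_B\gamma_\ell^{max}\tau_h^{max}/\sqrt{n_\ell}$ from the preceding proposition produces the first term of $\varepsilon_{\text{PDFA}}$.

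For the log-determinant term, diagonality makes the ratio of determinants factor over coordinates, so its logarithm becomes a sum of $n_\ell$ scalar log-ratios of the form $\log\frac{\alpha(a_j')^2+(1-\alpha)a_j^2}{(a_j^2)^{1-\alpha}((a_j')^2)^\alpha}$ (the $\sigma^2/m^2$ prefactors cancel since $\alpha+(1-\alpha)=1$). Applying the extremal two-sided bounds on $a_j,a_j'$, and using that neighboring datasets differ in only one sample so that at least $m-1$ of the summands defining $a_j^2$ and $(a_j')^2$ coincide, delivers the closed-form second term of $\varepsilon_{\text{PDFA}}$; the denominator $(m+1)(\gamma_\ell^{min}\tau_h^{min})^2-(\gamma_\ell^{max}\tau_h^{max})^2$ arises precisely from pushing one summand to its extremal value while keeping the others at their worst uniform bound, and the factor $n_\ell$ comes from summing the identical per-coordinate bound.

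The main obstacle is ensuring positive-definiteness of $M=\alpha\bfSigma'+(1-\alpha)\bfSigma$: since $\alpha>1$ the coefficient $(1-\alpha)$ is negative, so a naive bound is not enough; one must control the admissible range of $\alpha$ relative to the ratio $\gamma_\ell^{max}\tau_h^{max}/(\gamma_\ell^{min}\tau_h^{min})$, and the specific combinatorial denominator in the log-term encodes exactly this constraint (if it becomes non-positive the RDP bound is vacuous, consistently with $\Renyi_\alpha=+\infty$). Once $\varepsilon_{\text{PDFA}}$ is established for a single column at a single iteration, composition across the $n_{\ell-1}$ columns of $\bfW^\ell$, the $L$ layers, and the $T$ training iterations follows directly from Theorem \ref{theorem: combination parameters RDP}, yielding $(\alpha,LTn_{\ell-1}\varepsilon_{\text{PDFA}})$-RDP, and the claimed $(\varepsilon,\delta)$-DP statement follows by applying Theorem \ref{theorem: conversion RDP DP}.
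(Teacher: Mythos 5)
Your overall strategy is the same as the paper's: apply Proposition \ref{proposition: renyi multivariate gaussian} to the two diagonal-covariance Gaussians, split the divergence into a per-coordinate quadratic term and a per-coordinate log-ratio, exploit the fact that $a_j^2$ and $a_j'^2$ differ in a single summand, and finish with the clipping bounds, the sensitivity bound, and the composition and conversion theorems. Your treatment of the log-determinant term and of the composition over columns, layers and iterations matches the paper's proof essentially step for step.

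There is, however, one concrete gap in the quadratic term. You claim that the uniform lower bound $a_j^2,(a_j')^2\geq \frac{m}{n_\ell}(\gamma_\ell^{min}\tau_h^{min})^2$ yields a lower bound on $\lambda_{\min}(M)$ with $M=\alpha\bfSigma'+(1-\alpha)\bfSigma$. It does not: since $1-\alpha<0$, separate lower bounds on the two covariances give no lower bound on the combination. With $(a_j')^2$ near its minimum and $a_j^2$ larger, one has $M_{jj}=\frac{\sigma^2}{m^2}\big((a_j')^2-(\alpha-1)(a_j^2-(a_j')^2)\big)$, which can fall well below $\frac{\sigma^2}{m n_\ell}(\gamma_\ell^{min}\tau_h^{min})^2$ and even become negative for large $\alpha$. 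You flag positive-definiteness as the ``main obstacle'' but defer it, whereas resolving it is exactly the step needed to produce the first term of $\varepsilon_{\textrm{PDFA}}$. The ingredient you need is one you already invoke for the log term: because $D\sim D'$, only the summand $i=I$ differs, so $\alpha a_j'^2+(1-\alpha)a_j^2=a_j^2+\alpha\big[(\Tilde{\phi}'_{Ij}\Tilde{h}_{Ik})^2-(\phi'_{Ij}h_{Ik})^2\big]$; the paper then takes the orientation of the neighboring pair for which the bracket is non-negative, which settles positive-definiteness for all $\alpha>1$ and gives $M_{jj}\geq\frac{\sigma^2}{m^2}a_j^2\geq\frac{\sigma^2}{mn_\ell}(\gamma_\ell^{min}\tau_h^{min})^2$ in one stroke. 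Without this identity (or an explicit $\alpha$-dependent correction to the denominator), the first term of your bound does not follow; once it is inserted, your outline coincides with the paper's argument.
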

\begin{proof}
In the following, the variables with a prime correspond to the ones built upon dataset $D'$. According to \eqref{Eq: our Gaussian mechanism}, the covariance matrices $\bfSigma$ and $\bfSigma'$ are diagonal and any of their weighted sum is diagonal, as well as their inverse. Moreover, the determinant of a diagonal matrix is the product of its diagonal elements. Using these elements in \eqref{eq: Rényi divergence 2 gaussian distribs} yield: 
\begin{equation*}
    \mathbb{D}_\alpha(P\|Q)=  \sum_{j=1}^{n_{\ell}}\bigg(\frac{\alpha m^2}{2\sigma^2} \frac{(f_{j}(D)-f_{j}(D'))^2}{\alpha \, a_{j}'^2 + (1-\alpha)a_{j}^2 } -\frac{1}{2(\alpha-1)}\log\bigg[\frac{(1-\alpha)a_{j}^2 + \alpha a_{j}'^2}{a_{j}^{2(1-\alpha)}a_{j}'^{2\alpha}}\bigg]\bigg)
\end{equation*}
Using the fact that we are studying neighboring datasets, the sums composing $a_j$ and $a_j'$ differ by only one element at element $i = I$. This implies that 
\begin{equation*}
    \alpha \, a_{j}'^2 + (1-\alpha)a_{j}^2 = a_{j}^2 + \alpha [(\Tilde{\phi}'_{Ij} \Tilde{h}_{Ik})^2 - (\phi'_{Ij} h_{Ik})^2)]
\end{equation*}
where $\Tilde{\phi}'_{Ij}$ and $ \Tilde{h}_{Ik}^2$ are taken on the dataset $D'$. By choosing $D$ and $D'$ such that $[(\Tilde{\phi}'_{Ij} \Tilde{h}_{Ik})^2 - (\phi'_{Ij} h_{Ik})^2)] \geq 0$ and some rearrangement, we can upper bound the Rényi divergence by:
\begin{align*}
    \mathbb{D}_\alpha(P\|Q) \leq \frac{\alpha. n_{\ell}.m^2}{2\sigma^2}
    \frac{\Delta_{\bff}^2}{(\sqrt{\frac{m}{n_\ell}}\gamma_\ell^{min}\tau_h^{min})^2}
    + \frac{\alpha.n_{\ell}}{2(\alpha-1)}\log\bigg[\frac{m(\gamma_\ell^{min}\tau_h^{min})^2}{(m+1)(\gamma_\ell^{min}\tau_h^{min})^2 - (\gamma_\ell^{max}\tau_h^{max})^2} \bigg]
\end{align*}
Using the bound on the sensitivity $\bff$ computed in \eqref{Eq: our sensitivity}, we obtain the desired $\epsilon_{\textrm{PDFA}}$, upper bound of the Rényi divergence. A more detailed proof is presented in Appendix \ref{appsec: proof of theorem}. 
\end{proof}

\begin{remark}
This bound is not tight since it assumes that all the activations reach their worst cases in all the layers for upper bounding. However, obtaining a tighter bound would be very challenging since the values of the covariance matrices depend on the output of the neurons of the Neural network, which are data and architecture dependent.

We believe tighter bounds could be obtained in much simpler cases. First we can notice that having equal covariance matrices $\bfSigma$ and $\bfSigma'$ would cancel the logarithm term. If additionally we assume that all the activations saturate to their clipping values, then we would retrieve the formula of $\epsilon$ in \cite{lee2020}. 

Owing to mini-batch training, we believe the privacy parameter could be further improve by considering subsampling mechanism \cite{wang2019subsampled} and its properties. However, this would require a novel theoretical framework adapted to our case and we leave it for future work.

\end{remark}

\section{Experimental results}
\label{sec:simulations}

\begin{figure}
    \centering
    \includegraphics[width=0.95\textwidth]{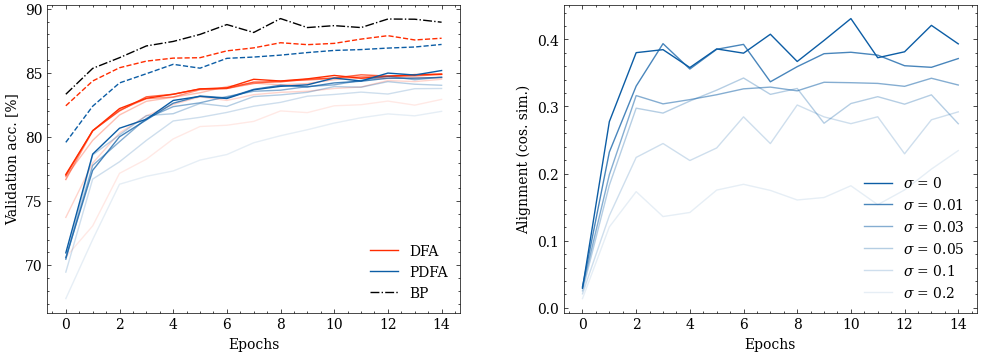}
    \caption{\textbf{Photonic training on FashionMNIST.} Left: BP, DFA, and photonic DFA (PDFA) training runs for various degrees of privacy. Dashed runs (\textbf{-$\,$-}) are non-private. Increasingly transparent runs have increased noise, see Table \ref{tab:photonic_fashion} for details. PDFA is always very close to DFA performance, and both are robust to noise. Right: gradient alignment (cosine similarity between PDFA and BP gradients) for the second layer of the network, at varying degrees of noise. Increasing noise degrades alignment, but alignment values remain high enough to support learning.} 
    \label{fig:photonic_results}
\end{figure}
In this section, we demonstrate that photonic training is robust to Gaussian mechanism, i.e. adding noise as in Eq. \ref{eq: dfa_update_with_clipping}, delivering good end-task performance even under strong privacy constraints. As detailed by our theoretical analysis, we focus on two specific mechanisms: 
\begin{itemize}
    \item \textbf{Clipping and offsetting the neurons} with  $\frac{\tau_h^{max}}{\sqrt{n_\ell}}$ and $\frac{\tau_h^{min}}{\sqrt{n_\ell}}$ to enforce  $\tau_h^{min}\leq \|\bfh^l\|_2\leq \tau_h^{max}$, as explained in section \ref{subsection: clipping};
    \item \textbf{Adding noise $\mathbf{g}$ to $\mathbf{Be}$}, according to the clipping of $\mathbf{Be}$ with $\tau_f$ ($||\mathbf{Be}||_2 \leq \tau_f$) and the scaling of $\mathbf{g}$ with $\sigma$ ($\mathbf{g} \sim \mathcal{N}(0, \sigma^2 \mathbf{I})$).
\end{itemize}

To make our results easy to interpret, we fix $\tau_f = 1$, such that $\sigma = 0.1$ implies $||\mathbf{g}||_2 \simeq ||\mathbf{Be}||_2$. At $\sigma = 0.01$, this means that the noise is roughly 10\% of the norm of the DFA learning signal. This is in line with differentially-private setup using backpropagation, and is in fact more demanding as our experimental setup makes it such that this is a lower bound on the noise. 

\paragraph{Photonic DFA.} We perform the random projection $\mathbf{Be}$ at the core of the DFA training step using the OPU, a photonic co-processor. As the inputs of the OPU are binary, we ternarize the error -- although binarized DFA, known as Direct Random Target Propagation \cite{frenkel2021learning}, is possible, its performance is inferior. Ternarization is performed using a tunable threshold $t$, such that values smaller than $-t$ are set to -1, values larger than $t$ are set to 1, and all in between values are set to 0. We then project the positive part $\mathbf{e}_+$ of this vector using the OPU, obtaining $\mathbf{Be}_+$, and then the negative part $\mathbf{e}_-$, obtaining $\mathbf{Be}_-$. Finally, we substract the two to obtain the projection of the ternarized error, $\mathbf{B}(\mathbf{e}_+ - \mathbf{e}_-)$. This is in line with the setup proposed in \cite{launay2020hardware}. We refer thereafter to DFA performed on a ternarized error on a GPU as ternarized DFA (TDFA), and to DFA performed optically with a ternarized error as photonic DFA (PDFA).

\paragraph{Setting.} We run our simulations on cloud servers with a single NVIDIA V100 GPU and an OPU, for a total estimate of 75 GPU-hours. We perform our experiments on FashionMNIST dataset \cite{xiao2017fashion}, reserving 10\% of the data as validation, and reporting test accuracy on a held-out set. We use a fully-connected network, with two hidden layers of size 512, with $\tanh$ activation. Optimization is done over 15 epochs with SGD, using a batch size of 256, learning rate of 0.01 and 0.9 momentum. For TDFA, we use a threshold of 0.15. Despite the fundamentally different hardware platform, no specific hyperparameter tuning is necessary for the photonic training: this demonstrates the reliability and robustness of our approach.

\paragraph{BP baseline.} We also apply our DP mechanism to a network trained with backpropagation. The clipping and offsetting of the activations' neurons is unchanged, but we adapt the noise element. We apply the noise on the derivative of the loss once at the top of the network. We also lower the learning rate to $10^{-4}$ to stabilize runs.

\paragraph{Results.}  We fix $\tau_h^\text{max} = 1$ for all experiments, and consider a range of $\sigma$ corresponding to noise between 0-200\% of the DFA training signal $\mathbf{Be}$. We also compare to a non-private, vanilla baseline. Results are reported in Table \ref{tab:photonic_fashion} and Figure \ref{fig:photonic_results}. \\
We find our DFA-based approach to be remarkably robust to the addition of noise, providing Differential Privacy, with a test accuracy hit contained within 3\% of baseline for up to $\sigma=0.05$ (i.e. noise 50\% as large as the training signal). Most of the performance hit can actually be attributed to the aggressive activation clipping, with noise having a limited effect. In comparison, BP is far more sensitive to activation clipping and to our noise mechanism. However, our method was devised for DFA and not BP, explaining the under-performance of BP. Finally, photonic training achieves good test accuracy, always within 1\% of the corresponding DFA run. This demonstrates the validity of our approach, on a real photonic co-processor. We note that, usually, demonstrations of neural networks with beyond silicon hardware are mostly limited to simulations \cite{hughes2018training, guo2019end}, or that these demonstrations come with a significant end-task performance penalty \cite{xu202111, wetzstein2020inference}.\\
\textbf{Additional results} with similar conclusions on MNIST and CIFAR-10 are presented in Appendix \ref{appsection: additional results}.

\begin{table}[]
\centering
\caption{\textbf{Test accuracy on FashionMNIST with our DP mechanism.} We find our approach to be robust to increasing DP noise $\sigma$. In particular, photonic DFA results (PDFA) are always within 1\% of the corresponding DFA run.}
\label{tab:photonic_fashion}
\begin{tabular}{@{}lccccccc@{}}
\toprule
$\mathbf{\sigma}$ & \multirow{2}{*}{\textbf{non-private}}      
& \textbf{0} & \textbf{0.01} & \textbf{0.03} & \textbf{0.05} & \textbf{0.1} & \textbf{0.2} \\
$\tau_f$ &  & \multicolumn{6}{c}{\textbf{1}}                                                           \\
\midrule
\textbf{BP}      & 88.33                                 & 75.22           & 70.71         & 71.47         &  71.27         & 70.28         & 66.78        \\
\textbf{DFA}      & 86.80                                 & 84.20           & 84.04         & 84.15         & 83.70         & 83.06        & 81.66        \\
\textbf{TDFA}     & 86.63                                 & 84.20           & 84.38         & 84.04         & 83.94         & 82.98        & 80.80        \\
\textbf{PDFA}     & 85.85                                 & 84.00           & 83.79         & 83.69              & 83.36              &      82.63        &          80.94    \\ \bottomrule
\end{tabular}
\end{table}
\section{Conclusion and Outlooks}
\label{sec:conclusion}

We have investigated how the Gaussian measurement noise 
that goes with the use of the photonic chips known as Optical Processor
Units, can be taken advantage of to ensure a Differentially Private
Direct Feedback Alignment training algorithm for deep architectures. 
We theoretically establish the features of the so-obtained
{\em Photonic Differential Privacy} and we feature these theoretical
findings with compelling empirical results showing how adding noise does not decreases the performance significantly.

At an age where both privacy-preserving training algorithms and energy-aware
machine learning procedures are a must, our contribution addresses both points
through our photonic differential privacy framework. As such we believe the holistic machine learning contribution we bring will mostly bring positive impacts by reducing energy consumption when learning from large-scale datasets and by keeping those datasets private. On the negative impact side, our DP approach is heavily based on clipping,
which is well-known to have negative effects on underrepresented classes and groups \cite{bagdasaryan2019differential,hooker2021moving} in a machine learning model. 

We plan to extend the present work in two ways. First, we would like to refine 
the theoretical analysis and exhibit privacy properties that are more in line
with the observed privacy; this would give us theoretical grounds to help us
set parameters such as the clipping thresholds or the noise modulation. Second, 
we want to expand our training scenario and address wider ranges of applications 
 such as recommendation, federated learning, natural language processing. 
 We also plan  to spend efforts so as to mitigate the effect of clipping
 on the fairness of our model \cite{xu2020removing}.

\newpage
\section*{Acknowledgments}
R.O. acknowledges funding from the Région Ile-de-France. The authors would like to thank Kilian Muller, Gustave Pariente and Daniel Hesslow for fruitful discussions regarding the OPU. This work was granted access to the HPC/AI resources of IDRIS under the allocation 2021-A0101012429 made by GENCI. Specifically, we thank the team of the Jean Zay supercomputer for their support.
\bibliographystyle{plain}
\bibliography{federated_dfa}
\newpage
\appendix 
\section{Complete proof of the Differential Privacy parameters}

\subsection{Extended proof of Proposition \ref{proposition 3}}\label{appsec: proof of theorem}
As a reminder, we would like to compute the Rényi divergence of the following Gaussian mechanism, where all the quantities are clipped as in Eq. 8: 
\begin{align}
    \frac1m \sum_{i=1}^m((\bfB\bfe_i) \odot \phi'(\bfz_i))h_{ik} + \frac1m \sum_{i=1}^m(\bfg_i \odot \phi'(\bfz_i))h_{ik} =  \bff_k(D) + \mathcal{N}(0,\bfSigma_k) \label{Eq: our Gaussian mechanism}
\end{align}
where $\bfSigma_k = \frac{\sigma^2}{m^2}\textrm{\textbf{diag}}(\bfa_k)^2$ and $(\bfa_k)_j = \sqrt{\sum_{i=1}^m(\phi'_{ij} h_{ik})^2}, \forall j=1,\dots, n_{\ell-1}$. As explained in the main text, we will focus on column $k$ and will drop the $k$ indices.  
The proposition we want to prove is the following: 
\begin{manualtheorem}{3}[Photonic Differential Privacy parameters]
Given two probability distributions $P \sim \mathcal{N}(\bff(D),\bfSigma)$ and $Q \sim \mathcal{N}(\bff(D'),\bfSigma')$ corresponding to the Gaussian mechanisms depicted in \eqref{Eq: our Gaussian mechanism} on neighboring datasets $D$ and $D'$, the Rényi divergence of order $\alpha$ between these mechanisms is:
\begin{align}
    \mathbb{D}_\alpha(P\|Q)&\leq \nonumber \frac{2. \alpha  }{m.\sigma^2}
    \frac{(\gamma^{max}\tau^{max}\tau_B)^2}{(\gamma_\ell^{min}\tau_h^{min})^2}
    + \frac{n_{\ell}.\alpha}{2(\alpha-1)}\log\bigg[\frac{m(\gamma_\ell^{min}\tau_h^{min})^2}{(m+1)(\gamma_\ell^{min}\tau_h^{min})^2 - (\gamma_\ell^{max}\tau_h^{max})^2} \bigg] \\
    &=\varepsilon_{\textrm{PDFA}}
\end{align}
Our mechanism is therefore $(\alpha, T\varepsilon_{\textrm{PDFA}})$-RDP with $T$ the number of training epochs. We can deduce that the mechanism on the weight matrix with $n_{\ell - 1}$ columns is $(\alpha, Tn_{\ell -1}\varepsilon_{\textrm{PDFA}})$-RDP. Then the mechanism of the whole network composed of $L$ layers is $(\alpha, LTn_{\ell -1}\varepsilon_{\textrm{PDFA}})$-RDP. 
We can then convert our bound to DP parameters using Theorem 2 to obtain a $(LTn_{\ell -1}\varepsilon_{\textrm{PDFA}} + \frac{\log 1/\delta}{\alpha-1},\delta)$-DP mechanism for all $\delta\in(0,1)$.
\end{manualtheorem}

\begin{proof}

In the following, the variables with a prime correspond to the ones built upon dataset $D'$. According to Eq. 13, the covariance matrices $\bfSigma$ and $\bfSigma'$ are diagonal and any of their weighted sum is diagonal, as well as their inverse. Moreover, the determinant of a diagonal matrix is the product of its diagonal elements. Using this in Eq. 7 yields: 
\begin{equation*}
    \mathbb{D}_\alpha(P\|Q)=  \sum_{j=1}^{n_{\ell}}\bigg(\frac{\alpha m^2}{2\sigma^2} \frac{(f_{j}(D)-f_{j}(D'))^2}{\alpha \, a_{j}'^2 + (1-\alpha)a_{j}^2 } -\frac{1}{2(\alpha-1)}\log\bigg[\frac{(1-\alpha)a_{j}^2 + \alpha a_{j}'^2}{a_{j}^{2(1-\alpha)}a_{j}'^{2\alpha}}\bigg]\bigg)
\end{equation*}
Using the fact that we are studying neighboring datasets, the sums composing $a_j$ and $a_j'$ differ by only one element at element $i = I$. This implies that 
\begin{align*}
    \alpha \, a_{j}'^2 + (1-\alpha)a_{j}^2 &= \alpha.\sum_{i=1}^m(\Tilde{\phi}'_{ij} \Tilde{h}_{ik})^2 + (1-\alpha).\sum_{i=1}^m(\phi'_{ij} h_{ik})^2\\
    &= \sum_{i=1}^m(\phi'_{ij} h_{ik})^2  +  \alpha.(\sum_{i=1}^m(\Tilde{\phi}'_{ij} \Tilde{h}_{ik})^2 -\sum_{i=1}^m(\phi'_{ij} h_{ik})^2 )\\
    &= a_{j}^2 + \alpha [(\Tilde{\phi}'_{Ij} \Tilde{h}_{Ik})^2 - (\phi'_{Ij} h_{Ik})^2)]
\end{align*}
where $\Tilde{\phi}'_{Ij}$ and $ \Tilde{h}_{Ik}^2$ are taken on dataset $D'$. Inserting this in the Rényi divergence yields: 
\begin{equation*}
    \mathbb{D}_\alpha(P\|Q)=  \sum_{j=1}^{n_{\ell}}\bigg(\frac{\alpha m^2}{2\sigma^2} \frac{(f_{j}(D)-f_{j}(D'))^2}{a_{j}^2 + \alpha [(\Tilde{\phi}'_{Ij} \Tilde{h}_{Ik})^2 - (\phi'_{Ij} h_{Ik})^2)]} -\frac{1}{2(\alpha-1)}\log\bigg[\frac{a_{j}^2 + \alpha [(\Tilde{\phi}'_{Ij} \Tilde{h}_{Ik})^2 - (\phi'_{Ij} h_{Ik})^2)]}{a_{j}^{2(1-\alpha)}a_{j}'^{2\alpha}}\bigg]\bigg)
\end{equation*}
By choosing $D$ and $D'$ such that $[(\Tilde{\phi}'_{Ij} \Tilde{h}_{Ik})^2 - (\phi'_{Ij} h_{Ik})^2)] \geq 0$, the Rényi divergence is upper bounded as follow: 
\begin{equation*}
    \mathbb{D}_\alpha(P\|Q) \leq   \sum_{j=1}^{n_{\ell}}\bigg(\frac{\alpha m^2}{2\sigma^2} \frac{(f_{j}(D)-f_{j}(D'))^2}{a_{j}^2 } -\frac{1}{2(\alpha-1)}\log\bigg[\frac{a_{j}^{2\alpha} }{a_{j}'^{2\alpha}}\bigg]\bigg)
\end{equation*}
Noting that  $a_j^2 = \sum_{i=1}^m(\phi'_{ij} h_{ik})^2 + (\Tilde{\phi}'_{Ij} \Tilde{h}_{Ik})^2 -(\Tilde{\phi}'_{Ij} \Tilde{h}_{Ik})^2  = a_j'^2 - [(\Tilde{\phi}'_{Ij} \Tilde{h}_{Ik})^2 -  (\phi'_{Ij} h_{Ik})^2] $ yields: 
\begin{align*}
    \mathbb{D}_\alpha(P\|Q) &\leq   \sum_{j=1}^{n_{\ell}}\bigg(\frac{\alpha m^2}{2\sigma^2} \frac{(f_{j}(D)-f_{j}(D'))^2}{a_{j}^2 } -\frac{\alpha}{2(\alpha-1)}\log\bigg[\frac{a_{j}^{2} }{a_{j}'^{2}}\bigg]\bigg)\\
    &\leq \sum_{j=1}^{n_{\ell}}\bigg(\frac{\alpha m^2}{2\sigma^2} \frac{(f_{j}(D)-f_{j}(D'))^2}{a_{j}^2 } - \frac{\alpha}{2(\alpha-1)}\log\bigg[\frac{a_j'^2 - [(\Tilde{\phi}'_{Ij} \Tilde{h}_{Ik})^2 -  (\phi'_{Ij} h_{Ik})^2] }{a_{j}'^{2}}\bigg]\bigg)\\
    &\leq \sum_{j=1}^{n_{\ell}}\bigg(\frac{\alpha m^2}{2\sigma^2} \frac{(f_{j}(D)-f_{j}(D'))^2}{a_{j}^2 } + \frac{\alpha}{2(\alpha-1)}\log\bigg[\frac{a_{j}'^{2}}{a_j'^2 - [(\Tilde{\phi}'_{Ij} \Tilde{h}_{Ik})^2 -  (\phi'_{Ij} h_{Ik})^2] }\bigg]\bigg)\\
    &\leq \frac{\alpha m^2}{2\sigma^2} \frac{n_\ell.\Delta_{\bff}^2 }{m (\gamma_\ell^{min}\tau_h^{min})^2 }+ \sum_{j=1}^{n_{\ell}}\frac{\alpha}{2(\alpha-1)}\log\bigg[\frac{a_{j}'^{2}}{a_j'^2 - [(\Tilde{\phi}'_{Ij} \Tilde{h}_{Ik})^2 -  (\phi'_{Ij} h_{Ik})^2] }\bigg]\\
    &\leq \frac{2. \alpha  }{m.\sigma^2}
    \frac{(\gamma^{max}\tau^{max}\tau_B)^2}{(\gamma_\ell^{min}\tau_h^{min})^2}
    + \frac{n_{\ell}.\alpha}{2(\alpha-1)}\log\bigg[\frac{m(\gamma_\ell^{min}\tau_h^{min})^2}{(m+1)(\gamma_\ell^{min}\tau_h^{min})^2 - (\gamma_\ell^{max}\tau_h^{max})^2} \bigg]\\
    &=\varepsilon_{\textrm{PDFA}}
\end{align*}
where we used the upper bounds on the sensitivity $\Delta_{\bff}^2$ and $a_j'^2$. This is the result of Proposition 3.\end{proof}
Note that an alternative expression is: 
\begin{align*}
    \mathbb{D}_\alpha(P\|Q) &\leq \frac{\alpha m}{2\sigma^2} \frac{n_\ell . \Delta_{\bff}^2 }{ (\gamma_\ell^{min}\tau_h^{min})^2 }  + \sum_{j=1}^{n_{\ell}}\frac{\alpha}{2(\alpha-1)}\log\bigg[\frac{a_{j}'^{2}}{a_j'^2 - [(\Tilde{\phi}'_{Ij} \Tilde{h}_{Ik})^2 -  (\phi'_{Ij} h_{Ik})^2] }\bigg]\\
    &= \frac{\alpha m}{2\sigma^2} \frac{n_\ell.\Delta_{\bff}^2 }{ (\gamma_\ell^{min}\tau_h^{min})^2 } + \sum_{j=1}^{n_{\ell}}\frac{\alpha}{2(\alpha-1)}\log\bigg[\frac{\sum_{i=1}^m(\Tilde{\phi}'_{ij} \Tilde{h}_{ik})^2}{\sum_{i=1}^m(\Tilde{\phi}'_{ij} \Tilde{h}_{ik})^2 - [(\Tilde{\phi}'_{Ij} \Tilde{h}_{Ik})^2 -  (\phi'_{Ij} h_{Ik})^2] }\bigg]\\
    &= \frac{\alpha m}{2\sigma^2} \frac{n_\ell.\Delta_{\bff}^2 }{ (\gamma_\ell^{min}\tau_h^{min})^2 } + \sum_{j=1}^{n_{\ell}}\frac{\alpha}{2(\alpha-1)}\log\bigg[\frac{\sum_{i\neq I}(\Tilde{\phi}'_{ij} \Tilde{h}_{ik})^2 +(\Tilde{\phi}'_{Ij} \Tilde{h}_{Ik})^2 }{\sum_{i\neq I}(\Tilde{\phi}'_{ij} \Tilde{h}_{ik})^2 +(\phi'_{Ij} h_{Ik})^2 }\bigg]\\
    &\leq \frac{\alpha m}{2\sigma^2} \frac{n_\ell.\Delta_{\bff}^2 }{ (\gamma_\ell^{min}\tau_h^{min})^2 } + \frac{n_\ell.\alpha}{2(\alpha-1)}\log\bigg[\frac{\sum_{i\neq I}(\Tilde{\phi}'_{ij} \Tilde{h}_{ik})^2 +(\gamma_\ell^{max}\tau_h^{max})^2 }{\sum_{i\neq I}(\Tilde{\phi}'_{ij} \Tilde{h}_{ik})^2 +(\gamma_\ell^{min}\tau_h^{min})^2 }\bigg]\\
     &\leq\frac{\alpha m}{2\sigma^2} \frac{n_\ell.\Delta_{\bff}^2 }{ (\gamma_\ell^{min}\tau_h^{min})^2 } + \frac{n_\ell.\alpha}{2(\alpha-1)}\log\bigg[\frac{(m-1).(\gamma_\ell^{min}\tau_h^{min})^2  +(\gamma_\ell^{max}\tau_h^{max})^2 }{(m-1).(\gamma_\ell^{min}\tau_h^{min})^2  +(\gamma_\ell^{min}\tau_h^{min})^2 }\bigg]\\
     &\leq\frac{\alpha m}{2\sigma^2} \frac{n_\ell.\Delta_{\bff}^2 }{ (\gamma_\ell^{min}\tau_h^{min})^2 } + \frac{n_\ell.\alpha}{2(\alpha-1)}\log\bigg[\frac{(m-1).(\gamma_\ell^{min}\tau_h^{min})^2  +(\gamma_\ell^{max}\tau_h^{max})^2 }{m.(\gamma_\ell^{min}\tau_h^{min})^2 }\bigg]\\
     &\leq \frac{2. \alpha  }{m.\sigma^2}
    \frac{(\gamma^{max}\tau^{max}\tau_B)^2}{(\gamma_\ell^{min}\tau_h^{min})^2} + \frac{n_\ell.\alpha}{2(\alpha-1)}\log\bigg[\frac{m-1}{m} + \frac{(\gamma_\ell^{max}\tau_h^{max})^2}{m.(\gamma_\ell^{min}\tau_h^{min})^2 }\bigg]
\end{align*}
\subsection{Equal covariance matrices}
First, we can notice that when the covariance matrices are equal, i.e. $\bfSigma = \bfSigma' = \frac{\sigma^2}{m^2}\textrm{\textbf{diag}}(\bfa_k)^2 $, the log-term in Eq. 7 is equal to $0$. Then, we have: 
\begin{align*}
    \mathbb{D}_\alpha(P\|Q) &= \sum_{j=1}^{n_{\ell}}\frac{\alpha m^2}{2\sigma^2} \frac{(f_{j}(D)-f_{j}(D'))^2}{a_{j}^2 } \\
    &\leq\frac{n_\ell. \alpha. m}{2\sigma^2} \sum_{j=1}^{n_{\ell}} \frac{(f_{j}(D)-f_{j}(D'))^2}{(\gamma_\ell^{min}\tau_h^{min})^2}\\
    &\leq\frac{n_\ell. \alpha .m}{2\sigma^2} \frac{\Delta_{\bff}^2}{(\gamma_\ell^{min}\tau_h^{min})^2} \\
    &\leq \frac{2\alpha}{m\sigma^2} \frac{(\tau_B\gamma^{max}_\ell\tau^{max}_h)^2}{(\gamma_\ell^{min}\tau_h^{min})^2}\\
    &\doteq \varepsilon_{2}
\end{align*}
\subsection{Equal saturating covariance matrices}
In this subsection, we will suppose that the covariance matrices are equal and saturating, i.e. $\bfSigma = \bfSigma' = \frac{\sigma^2}{n_\ell.m}(\gamma_\ell\tau_h)^2 \bfI $ with $\gamma_\ell = \{\gamma_\ell^{min},\gamma_\ell^{max} \}$ and $\tau_h = \{\tau_h^{min},\tau_h^{max}\}$. 
Then we can start by noticing that $a_j^2=a_j'^2 = \frac{m}{n_\ell}\tau_h^2\gamma_\ell^2 $. 
In that case, the sensitivity of the function can be written as:
\begin{align*}
\label{Eq: our sensitivity}
    \Delta_{\bff}^\ell &= \underset{D\sim D'}{\sup} \|\bff(D)-\bff(D')\|_2\leq \frac{2}{m}\left\| (\bfB^{\ell}\bfe_i) \odot \phi'_\ell(\bfz_i^\ell))h_{ik}^{\ell -1}\right\|_2\\
    &\leq \frac2m\tau_B\gamma_\ell\frac{\tau_h}{\sqrt{n_\ell}}
\end{align*}
This implies that: 
\begin{align*}
    \mathbb{D}_\alpha(P\|Q) &= \sum_{j=1}^{n_{\ell}}\frac{\alpha m^2}{2\sigma^2} \frac{(f_{j}(D)-f_{j}(D'))^2}{a_{j}^2 } \\
    &\leq\frac{n_\ell. \alpha. m}{2\sigma^2} \sum_{j=1}^{n_{\ell}} \frac{(f_{j}(D)-f_{j}(D'))^2}{(\gamma_\ell\tau_h)^2}\\
    &\leq\frac{n_\ell. \alpha .m}{2\sigma^2} \frac{\Delta_{\bff}^2}{(\gamma_\ell\tau_h)^2} \\
    &\leq \frac{2\alpha}{m\sigma^2} \frac{(\tau_B\gamma_\ell\tau_h)^2}{(\gamma_\ell\tau_h)^2}= \frac{2\alpha}{m\sigma^2} \tau_B^2\\
    &\doteq \varepsilon_{3}
\end{align*}

\newpage
\section{Additional numerical results on MNIST and CIFAR-10}\label{appsection: additional results}

\textbf{MNIST --} We provide below results on MNIST, obtained with the same code, procedure, and hyper-parameters as for the FashionMNIST experiments. These results are in line with Table 1 of our paper, with photonics results always close to ternarized ones. We notice that this "default" choice of hyper-parameters on MNIST results in ternarized DFA outperforming vanilla DFA. (We only lightly tune hyperparameters on BP, to demonstrate that our approach does not require any specific expensive fine-tuning search.) 

\begin{table*}[!h]
\centering

\begin{tabular}{@{}lccccccc@{}}
\toprule
$\mathbf{\sigma}$ & \multirow{2}{*}{\textbf{non-private}}      
&\textbf{0.01} &  \textbf{0.05} & \textbf{0.1} \\
$\tau_f$ &  & \multicolumn{3}{c}{\textbf{1}}    

\\
\midrule
BP 	 & 97,94 	& 62,63 	& 58,42 	& 48,33\\
DFA &	96,36 	& 92,99 	&92,68 	&92,45\\
TDFA 	& 97,09 &	93,67 	&93,57 	&93,28\\
PDFA 	& 96,95 &	93,60 &	93,57 &	93,12\\ \bottomrule

\end{tabular}
\caption{\textbf{Test accuracy on MNIST with our DP mechanism.} We find our approach to be robust to increasing DP noise $\sigma$. In particular, photonic DFA results (PDFA) are always within 1\% of the corresponding DFA run.}
\label{tab:photonic_fashion}
\end{table*}

\textbf{CIFAR-10 --} We chose to use a pre-trained network on ImageNet and extract its trained convolutional layers. Since these convolutions can be seen as feature extractors of the images and are not re-trained, they do not need to be taken into account into the Differential Privacy mechanism. We fine-tune only the fully-connected layers of the classifier using our Photonic DFA+DP mechanism.

We choose this expreiment to demonstrate the scalability of our scheme. We do not seek to achieve state-of-the-art performance or to exhaustively explore the dynamics/impact of different differentially private configuration (as we did with MNIST), but simply to show our scheme can scale to such harder tasks.

We used a VGG16 network pre-trained on ImageNet. We leave the convolutions untouched, and fine-tune the classifier layers (25088 --> 4096 --> 4096 --> 10) with differentially private photonic training. We do not use any data augmentation, and simply resize the CIFAR-10 images to 224x224. We fine-tune for 15 epochs, using SGD with learning rate $5.10^{-3}$, momentum 0.9, and batch size 256. Hyperparameters are kept identical across all methods and hardware. We obtain results both in a vanilla (no DP) setting as a comparison baseline, and in a differentially private setting yielding the following accuracies:\\
\textbf{Vanilla} (no differential privacy): 83.17\% (BP), 81.34\% (DFA), 83.36\% (TDFA). \\
\textbf{DP} ($\sigma = 0.05$, $\tau_{f}=1$): 60.45\% (BP), 79.68\% (DFA), 79.33\% (TDFA), 78.64\% (PDFA).

We note that this result shows good scalability, with performance in line with our MNIST/FashionMNIST results. Over all the experiments we have performed, the DFA algorithms seem much more resilient to adding noise and clipping (i.e. the DP alogirithmical modification) than Backpropagation, which could open new research directions. 

\end{document}